
\documentclass[letterpaper, 10 pt, conference]{ieeeconf}  




\IEEEoverridecommandlockouts                              %
\overrideIEEEmargins                                      %

\makeatletter
\let\proof\@undefined
\let\endproof\@undefined
\makeatother
\IEEEoverridecommandlockouts
\overrideIEEEmargins
\usepackage{cite}
\usepackage{url}
\usepackage{hyperref}
\usepackage{amsmath,amssymb,amsfonts}
\usepackage{algorithmic}
\usepackage{graphicx}
\usepackage{textcomp}

\usepackage{enumitem}
\usepackage{multirow}
\usepackage{booktabs}
\usepackage{xcolor}
\usepackage{flushend}

\usepackage{nicefrac}
\usepackage[font=small]{caption}
\usepackage{subcaption}
\usepackage{lipsum}

\usepackage[ruled,vlined,linesnumbered]{algorithm2e}
\usepackage{algorithmic,algorithm2e,float}

\SetKw{Continue}{continue}
\SetKw{Break}{break}

\newcommand{\vect}[1]{\boldsymbol{#1}}

\newcommand{\be}{\begin{equation}}
\newcommand{\ee}{\end{equation}}

\mathchardef\mhyphen="2D

\newcommand{\dvec}{\vect{D}}

\newcommand{\T}{\mathcal{T}}

\newcommand{\F}{\mathcal{F}}

\newcommand{\Rcal}{\mathcal{R}}

\newcommand{\Greedy}{\mathtt{Greedy}}
\newcommand{\Random}{\mathtt{Random}}
\newcommand{\FLNS}{\mathtt{LNS}}

\newcommand{\delv}{\mathtt{Task\mhyphen removal}}
\newcommand{\delr}{\mathtt{Robot\mhyphen removal}}

\newcommand{\grey}[1]{\textcolor{gray}{#1}}


\usepackage{amsthm}
\theoremstyle{definition}
\newtheorem{problem}{Problem}

\newtheorem*{remark*}{Remark}

\newtheorem{lemma}{Lemma}

\SetKwInput{KwInput}{Input}                
\SetKwInput{KwOutput}{Output}              

\title{\LARGE \bf
Designing Heterogeneous Robot Fleets for\\ Task Allocation and Sequencing
}

\author{Nils Wilde and Javier Alonso-Mora
\thanks{This research is supported by the European Union's Horizon 2020 research and innovation program under Grant 101017008.}
\thanks{N.~Wilde and J.~Alonso-Mora are with the Department for Cognitive Robotics, 3ME,
Delft University of Technology, Delft, Netherlands, 
\texttt{\{n.wilde, j.alonsomora\}@tudelft.nl}.
}%
}

\begin{document}

\maketitle
\thispagestyle{empty}
\pagestyle{empty}

\begin{abstract}
We study the problem of selecting a fleet of robots to service spatially distributed tasks with diverse requirements within time-windows.
The problem of allocating tasks to a fleet of potentially heterogeneous robots and finding an optimal sequence for each robot is known as multi-robot task assignment (MRTA). Most state-of-the-art methods focus on the problem when the fleet of robots is fixed. 
In contrast, we consider that we are given a set of available robot types and requested tasks, and need to assemble a fleet that optimally services the tasks while the cost of the fleet remains under a budget limit.
We characterize the complexity of the problem and provide a Mixed-Integer Linear Program (MILP) formulation. 
Due to poor scalability of the MILP, we propose a heuristic solution based on a Large Neighbourhood Search (LNS).
In simulations, we demonstrate that the proposed method requires substantially lower budgets than a greedy algorithm to service all tasks.
\end{abstract}


\section{Introduction}

Task allocation and sequencing is a fundamental problem in multi-robot systems, with applications in environmental monitoring~\cite{chandarana2021planning, sousa2020decentralized,smith2011persistent}, service in homes and health-care facilities \cite{wilde2022Online,sadeghi2018re}, pickup-and-delivery \cite{mathew2015planning} and autonomous mobility-on-demand \cite{alonso2017demand, zardini2022analysis}.
However, many real world problems pose a diverse set of requirements for robot capabilities, leading to specialized robot designs \cite{zhao2022graph,saberifar2022charting}. Deploying a heterogenous fleet of robots offers the most efficient use of resources, yet poses new challenges due to the increased combinatorial complexity.

In this paper, we consider the problem of designing a heterogenous fleet of robots for complex multi-robot missions. 
Given is a set of available robots with several characteristics such as the capability to service different types of tasks, how they can traverse the environment, their battery life and a deployment cost. The goal is to select robots that can perform their joint mission as well as possible while the total deployment cost remains under a certain budget.
For instance, in material transport applications the most efficient solutions might require a combination of high-capacity robots to service densely located tasks as well as cheaper low-capacity robots reaching remote task locations. Similarly, in environmental monitoring data collection might performed by ground vehicles or drones, which differ in their capabilities of traversing the environment, and potentially in the types of measurements they can take.

We illustrate an example in Figure \ref{fig:intro}. Here a fleet of robots is required to service a set of tasks, each task requires a robot to visit some location in the environment. There are two different types of requirements for the robot to service a task (blue and red). In Figure \ref{fig:intro1}, a fleet of three low-budget robots is deployed, each robot can only service one task type. Due to battery life limits or tight task deadlines, the fleet is not able to complete all tasks. Figure \ref{fig:intro2} shows a different fleet where one red and one blue robot are replaced by a single yet more flexible purple robot that can service both task types. This allows for a different allocation of tasks between robots such that all tasks can be serviced.

Designing a heterogeneous fleet for a mission requires algorithms which efficiently find combinations of different robot types that are well adapted to the tasks.
In  particular, we study the following problem: Given a set of available robot types, assemble a fleet that is able to service the largest number of tasks while the cost of the fleet stays within a budget.
First, we characterize the computational complexity of the problem, and present a mixed-linear integer program (MILP) formulation. 
Due to poor scalability of exact solutions, we propose a heuristic based on a large neighbourhood search (LNS).
LNS algorithms iteratively improve solutions to combinatorial problems by removing elements of the solution and subsequently reinserting these elements. Our method takes an integrated approach of simultaneously optimizing i) which robots to include in the fleet and ii) the missions, \textit{i.e.,} tours, executed by each robot. Therefore, we propose two removal heuristics that allow for alternating between improving the current tours and switching out robots in the fleet. 
In a series of simulation experiments, we show that this approach can find substantially better solutions than a greedy approach for various problem setups. 

\begin{figure}[t]
    \centering
    \begin{subfigure}[t]{0.23\textwidth}
         \centering
        \includegraphics[width=0.99\linewidth]{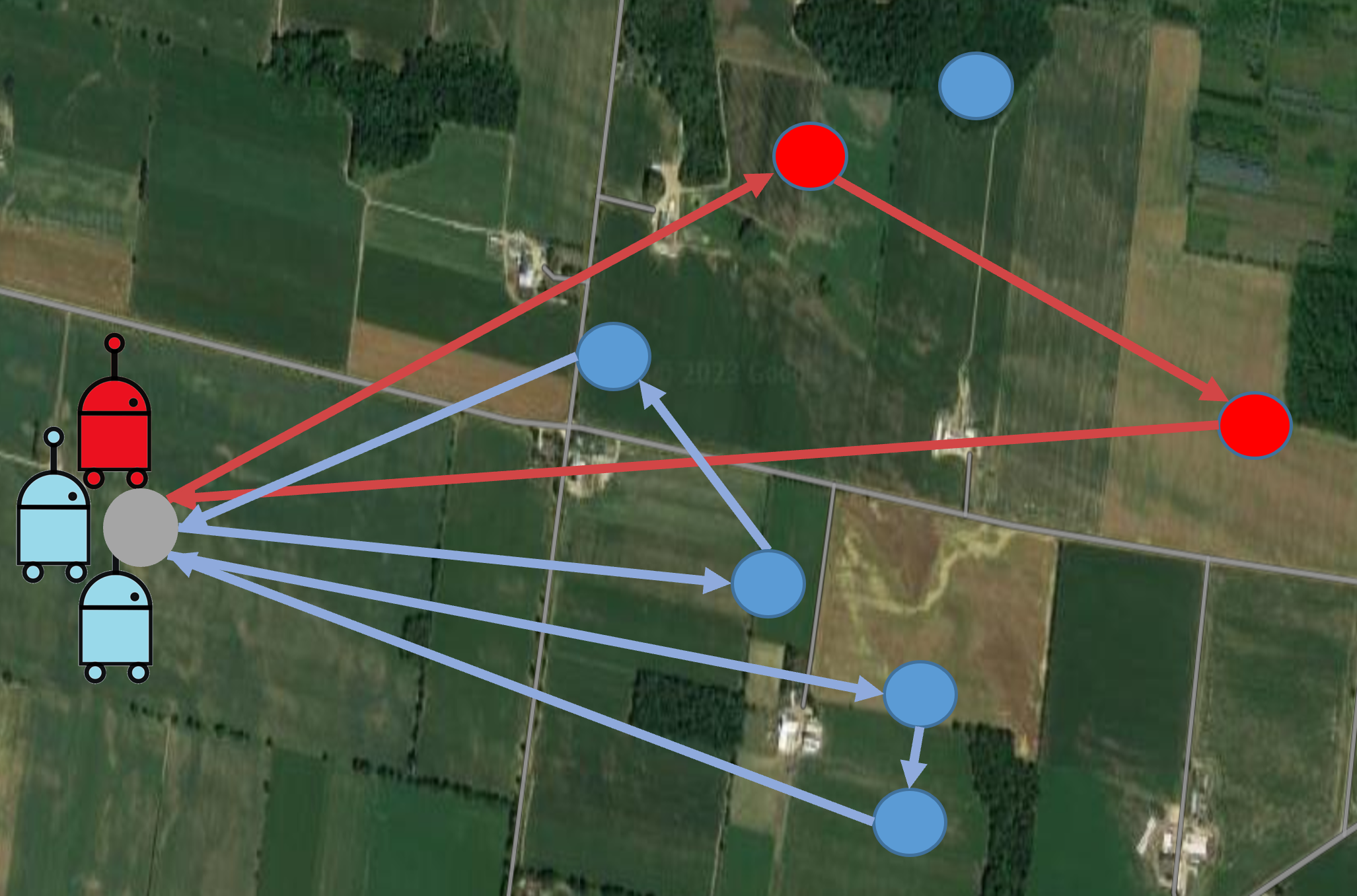}
    \caption{Fleet 1.}
        \label{fig:intro1}
    \end{subfigure}
    \begin{subfigure}[t]{0.23\textwidth}
         \centering
        \includegraphics[width=0.99\linewidth]{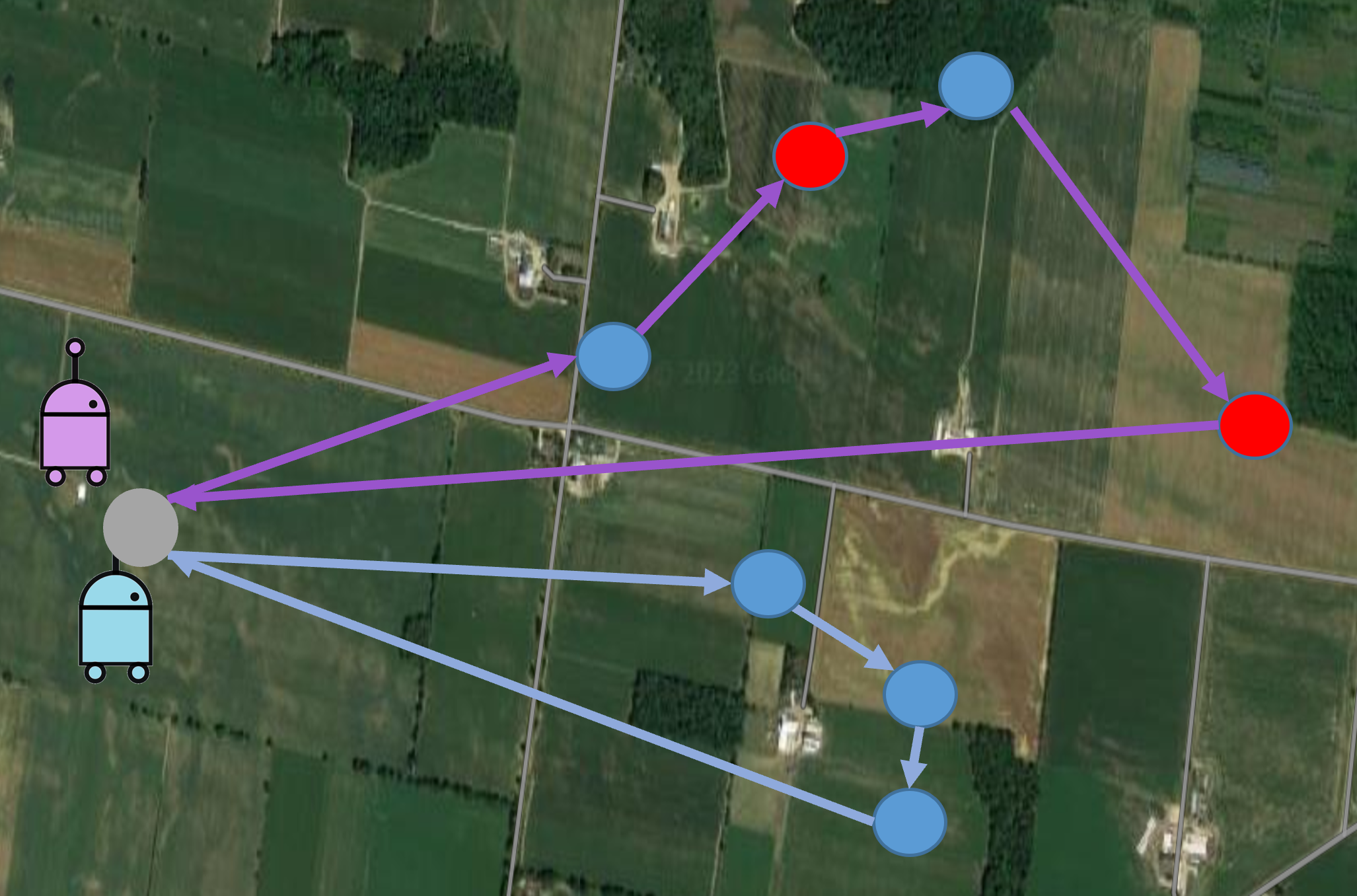}
    \caption{Fleet 2.}
        \label{fig:intro2}
    \end{subfigure}
    \caption{Example for a deployment of heterogenous fleets with different capabilities (red and blue) for different task requirements. Fleet 1 consists of three low cost robots that can each only service one task type. Fleet 2 introduces a more versatile robot (purple) that can service both task types.
    \vspace{-.5cm}
    }
    \label{fig:intro}
\end{figure}

\paragraph{Related Work}
Planning and coordination of heterogeneous robot fleets has found wide interest in the multi-robot systems community in recent years \cite{dixit2019dirichlet,lee2021stochastic,kit2019decentralized,shi2022risk,wesselhoft2022controlling, cecchi2021priority, mathew2015planning, sadeghi2017heterogeneous,xu2020approximation}.
Yet, few works focus on the problem of deciding which robots to include in a heterogeneous fleet. The authors of \cite{zhao2022graph} consider the problem of designing individual underwater robots, and then assembling an optimal fleet. Given the available robots, the fleet design problem is cast into a knapsack variant and solved with a breadth first search. A key difference to our work is that the method is focused on specific mission types and thus imposes different constraints: While we seek to find the best possible fleet given a budget constraint, their work seeks to find a fleet that covers a search area most efficiently.
The work of \cite{rjeb2021sizing} studies the design of homogeneous and heterogeneous robot fleets for logistic warehouses. Both problems are formulated as an Integer Linear Program (ILP). In addition to its focus on pickup-and-delivery, the paper assumes that any type of robot is able to fulfil any task, \textit{i.e.,} transportation request. Our work does not impose this assumption, such that not every fleet is able to service all tasks. Further, we consider additional constraints such as task deadlines and limited battery life.
Related to heterogeneous fleet design, researchers investigate the problem of determining the optimal swarm or fleet sizes for homogeneous multi-robot systems\cite{chandarana2018determining, cap2018multi, chaikovskaia2021sizing, wallar2019optimizing}. 
This either poses a multi-objective optimization problem, trading-off the primary mission objective with the operational cost for the fleet, or a problem of finding the smallest possible fleet such that certain objectives are met. 
Similarly, we are interested in finding a fleet that maximises the number of serviced tasks given a budget limit for assembling the fleet, yet we consider that multiple robots types are available.

Lastly, our solution is based on a large neighbourhood search (LNS). Such methods have been frequently used to solve multi-robot routing problems such as generalized traveling salesman \cite{smith2017glns} or MRTA with homogeneous and heterogeneous fleets \cite{booth2016constraint, sadeghi2017heterogeneous, pan2021multi}.

\paragraph{Contributions}
Our main contributions are as follows. First, we characterize hardness of the fleet design problem and provide an MILP formulation of the problem. Second, we present a large neighbourhood search (LNS) algorithm based on two removal heuristics to simultaneously optimize which robots to be employed in the fleet, and what tour each robot takes.
In simulation, we show the advantage of the proposed LNS approach over a greedy algorithm in several problem variations.

\section{Problem Statement}
We consider an offline multi-robot task assignment problem: Given an environment encoded as a graph $G=(V,E)$, a fleet of robots needs to service a fixed set of $N$ tasks $\T=\{T_1, \dots, T_N\}$ where each task requires one of the robots to travel to a vertex $v\in V$, before some deadline $t^d$.
Further, each task has a set of $\Phi$ requirements. We write a task as a tuple $T=(v,t^d, \Phi)$.

To service tasks $\T$, we can assemble a fleet of robots by choosing from a set of available robot types $\Rcal=\{R_1,\dots, R_l\}$. Each robot type $R\in\Rcal$ is a tuple $(\Psi, b, \beta, \dvec)$. Here $\Psi$ is a set of capabilities to fulfill task requirements; a robot of type $R_i$ can service task $T_j$ when $\Psi_i\supseteq\Phi_j$. Further, $b(R)$ is a fixed deployment cost and $\beta(R)$ the maximum battery life. The matrix $\dvec\in \mathbb{R}_{\geq0}^{|V|\times|V|}$ describes the traversal time of each edge of the graph for a robot of type $R$. This does not only encode different speeds for a robot in different parts of the environment, but can also capture the traversability of edges. For instance, when some edges are only usable by drones, a ground robot would have infinite traversal time for these.
A fleet is a collection of robots $F=\{r_1, r_2, \dots\}$ where each robot $r$ is of some type $R$. A fleet can use multiple robots of the same type, making $F$ a multi-set.
{To service tasks, each robot executes a tour $\tau$, visiting a sequence of task vertices. We do not consider inter-robot collision.}
Given a fleet $F$, an MRTA solver $\pi$ finds a set of tours $Q_{\pi}(F) =\{\tau_1,\tau_2, \dots\}$ for all robots $r_1, r_2, \dots$ in $F$ that optimizes some measure for the quality of service for tasks in $\T$. 
In this paper, we consider the objective to be the number of tasks serviced before their deadline, denoted by the functional $\rho(Q_{\pi}(F),\T)$. Finally, the length of each tour $l(\tau_i)$ must remain below the robot's battery limit $\beta(r_i)$. 
The fleet design problem is then formulated as follows:
\begin{problem}[Budgeted fleet design]
\label{prob:fleet_design}
Given an environment described by $G=(V,E)$, a set of tasks $\T$, an infinite supply of different robot types $\Rcal$ and a budget $B$, find a fleet of robots $F^*$ and an MRTA solver $\pi$, that solve
\be
\label{eq:objective}
\begin{aligned}
\max_{F} \;&\rho(Q_{\pi}(F), \T)\\
s.t.&\,  \sum_{r_i\in F} b(r_i) \leq B\\
&\,l(\tau_i) \leq \beta(r_i), \text{ for all }i=1,\dots,|F|\\
&\, r_i \in \Rcal \text{ for all } r_i \in F.\\
\end{aligned}
\ee
\end{problem}

In essence, the problem at hand seeks to find the best fleet given the available robot types to service a set of tasks, while the budget for robots is limited.

\section{Method}

First, we formulate the problem as an optimization of a set function in order to relate it to well-known combinatorial optimization problems, establishing hardness results, and to present a mixed integer linear program.
We then briefly study the greedy algorithm before presenting our proposed method based on a Large Neighbourhood Search (LNS).

\subsection{Approach}
We approach Problem \ref{prob:fleet_design} by formulating it as a subset selection problem. Given the robot types, let $\F$ be a multi-set containing $\lceil\nicefrac{B}{b_i}\rceil$ robots of each type $R_i$. By construction, any feasible solution to \eqref{eq:objective} is a subset of $\F$. 
This allows us to treat the problem as a heterogeneous variant of the team orienteering problem (TOP) \cite{vansteenwegen2011orienteering, Wilde2021LearningSubmod} with robots $\F$ where only some robots execute a tour of non-zero length, \textit{i.e.} \emph{activated}.
Thus, let $Q = \{\tau_1, \tau_2, \dots\}$ be the set of tours for all robots $r_1, r_2,\dots$ in the base set $\F$.
Further, let $q_i$ be an indicator taking value $1$ if $\tau_i\neq\emptyset$ and $0$ otherwise.
Given a budget $B$, tours $Q$ are \textit{feasible} when the cost of all robots with non-empty tours do not exceed the budget, \textit{i.e.,}
\be
\sum_{i=1}^{|Q|} q_i b(r_i)\leq B.
\label{eq:budget_feasible}
\ee

\paragraph{Computational Hardness}

It might not be surprising that Problem \ref{prob:fleet_design} is intractable since it contains MRTA as a subproblem. 
Nonetheless, we briefly study the hardness of the problem in more detail to highlight two important details.

We begin with stating the decision version of the problem: \textit{Given the inputs of Problem \ref{prob:fleet_design}, does there exist a fleet such that the reward jointly collected by the optimal tours for each robot is larger or equal to some given constant?}
We make two observations: First, in order to be a member of NP, a certificate for Problem \ref{prob:fleet_design} consists of not only a fleet but also a set of tours for the robots. Otherwise, the correctness of a solution could not be verified without solving the underlying vehicle routing problem.
Second, the problem is also NP-hard when the vehicle routing part is trivial. To show this, we provide a reduction from the $0/1$ knapsack problem.

\begin{lemma}[NP-hardness]
    The decision version of Problem \ref{prob:fleet_design} is NP-hard.
\end{lemma}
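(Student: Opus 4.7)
The plan is to establish NP-hardness by a polynomial-time reduction from the classical $0/1$ knapsack decision problem: given items $\{1,\dots,n\}$ with positive integer weights $w_i$ and values $v_i$, a capacity $W$, and a target value $V$, decide whether some subset $S\subseteq\{1,\dots,n\}$ has $\sum_{i\in S} w_i\leq W$ and $\sum_{i\in S} v_i\geq V$. The goal is to encode items as robot types in such a way that the routing subproblem is trivial, so that only the selection of robots matters.

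Concretely, I would construct the following instance of Problem~\ref{prob:fleet_design}. Take $G=(V,E)$ with vertex set $V=\{v_0,v_1,\dots,v_n\}$, where $v_0$ is a depot and every edge has traversal time $0$ for every robot type (equivalently, put all vertices at the same location). For each item $i$, introduce $v_i$ identical tasks placed at vertex $v_i$, each with a requirement $\phi_i$ unique to item $i$, and a deadline $t^d$ large enough to be inactive. Introduce a single robot type $R_i$ with capability set $\Psi_i=\{\phi_i\}$, deployment cost $b(R_i)=w_i$, and battery life $\beta(R_i)$ large enough to accommodate all $v_i$ tasks of type $i$ (trivially, any positive value suffices because all traversal times are $0$). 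Set the budget to $B=W$ and the reward threshold of the decision version to $V$. This construction is clearly computable in polynomial time.

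Next I would argue the equivalence of yes-instances. On the one hand, given a knapsack solution $S$, the fleet $F=\{R_i:i\in S\}$ has cost $\sum_{i\in S} w_i\leq W=B$, and because each robot $R_i$ uniquely matches and can reach all $v_i$ tasks of type $i$ within its battery and deadline, the optimal MRTA solver collects reward $\sum_{i\in S} v_i\geq V$. On the other hand, suppose a fleet $F^*$ with optimal tours collects reward at least $V$ within budget $B$. Because task requirements are pairwise disjoint across item indices, only robots of type $R_i$ can service the $v_i$ tasks associated with item $i$, and adding more than one robot of type $R_i$ cannot increase the reward (one $R_i$ already suffices to clear all of item $i$'s tasks). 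Hence we may assume $F^*$ uses each type at most once, and the set $S=\{i:R_i\in F^*\}$ is a feasible knapsack solution with value $\sum_{i\in S} v_i\geq V$.

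The main subtlety I expect is making precise that the routing component is indeed trivial, so that the reduction genuinely captures knapsack hardness rather than simply inheriting hardness from MRTA. The zero-traversal construction together with pairwise disjoint requirements ensures this: every feasible fleet has an obvious optimal routing obtained by sending each selected $R_i$ to sweep up exactly its $v_i$ tasks, so the decision reduces to subset selection under a budget constraint, i.e., knapsack. Since knapsack is NP-hard and the reduction is polynomial, the decision version of Problem~\ref{prob:fleet_design} is NP-hard, which also justifies the earlier remark that a valid NP certificate must include both the fleet and the tours, since verifying reward requires solving the routing subproblem in general.
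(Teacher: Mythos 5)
Your proposal is correct and follows essentially the same route as the paper: a reduction from $0/1$ knapsack in which each item becomes a vertex holding profit-many identical tasks with an item-unique requirement, robot types have disjoint capabilities with costs equal to the item weights, and the budget equals the knapsack capacity. The only difference is cosmetic: the paper trivializes routing by setting all travel times to $1$ and battery life to $2$ (so each robot can visit exactly one vertex), whereas you achieve the same effect with zero travel times and unconstrained battery, and your explicit argument for both directions of the equivalence (including that duplicate robots of a type add nothing) is a slightly more careful write-up of the same idea.
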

\begin{proof}
    We prove hardness via reduction from a $0/1$ knapsack problem \cite{kellerer2004multidimensional}.
    Given an instance of Knapsack consisting of a set of $n$ items, their weights and profits and a budget (all integer values), we construct a graph with a central depot and exactly one vertex for each item. For each vertex, we create several identical tasks with a deadline $\geq1$ and a requirement unique to the item. The number of task copies for each item equals the profit of the item. Further, let there be $n$ robot types $r_1,\dots r_n$ with disjoint capabilities, each able to service exactly one task. The robot costs $b(r_i)$ equal the weight of the $i$-th item.
    Finally, all robot travel times equal $1$, and the battery life $\beta=2$.
    Thus, each robot can visit exactly one vertex, service all task copies located there, and return to the depot. Finally, the budget $B$ equals the budget of the knapsack.
    A solution to the fleet design problem then contains a set of of tours for robots. This can easily be converted into a solution to the Knapsack problem: Looping over the tours $Q$, we identify the tasks being serviced, which directly correspond to items for the knapsack problem.
\end{proof}
We observe that vehicle routing is trivial in the instance created in the reduction: Each robot can only visit exactly one vertex and then returns to the depot.
This highlights that the fleet design problem is computationally intractable, regardless of the hardness of the underlying MRTA problem.


\paragraph{MILP Formulation}

We present an MILP formulation, adapting notation from MILPs for the TOP with time windows \cite{lin2017solving}. Given the inputs of the problem we can construct a complete graph $\bar{G}$ where vertices correspond to all $N$ task locations and the central depot.
For each robot type, the distance matrix $\bar{\vect{D}}$ then describes the shortest distances on the original graph $G$.
First, we create a copy of the depot vertex where each robot's tour will end such that the graph has vertices $0,1,\dots, N+1$.
Indices $i,j$ refer to vertices, index $k$ identifies a robot ranging from $1$ to $K$ where $K=|\F|$.
We use three binary decision variables. i) $x_{ij}^k=1$ indicates that robot $k$ traverses edge $(i,j)$, ii) $y_{i}^k=1$ indicates robot $k$ service task $i$, and iii) $z^k=1$ indicates if robot $k$ is used. We use two auxiliary variables: $s_i^k$ denotes the time robot $k$ visits vertex $i$ and $c_i^k$ is a binary variable indicating if robot $k$ can service the task $i$. Lastly, let $M$ be some large constant.
The MILP is then given by:

\begin{subequations}
\begin{align}
&\max  \sum_{i=1}^N\sum_{k=1}^K y_i^k \\
s.t.&
\sum_{j=1}^{N} x_{0j}^k 
= \sum_{j=1}^{N} x_{j,N+1}^k 
=z^k
\\&
\sum_{i=0, i\neq l}^{N+1} x_{il}^k =
\sum_{j=0, j\neq l}^{N+1} x_{lj}^k 
= y_l^k 
\\&
s_i^k + \bar{D}_{ij}^k - s_j^k \leq M(1-x_{ij}^k) 
\\&
\sum_{k=1}^K y_i^k\leq 1 
\\&
y_i^k \leq c_i^k  
\\&
s_{i}^k\leq y_i^k \cdot t^d_i 
\\&
\sum_{i=0}^N+1\bigg(
\sum_{j=0}^N+1
x_{ij}^k \bar{D}_{ij}^k 
\bigg)\leq \beta^k 
\\&
\sum_{k=1}^K z^k b^k \leq B 
\\&
s_{i}^k\geq0 
\\&
x_{ij}^k, y_i^k \in \{0,1\} \; \forall i,j=0,\dots, N+1;\ k=1,\dots,K.
\end{align}
\end{subequations}
The objective (3a) counts the number of serviced tasks. (3b) ensures that only activated robots leave the depot. (3c) and (3d) establish connectivity and record the time a robot visits a task. (3e-g) ensure that only one robot services each task, a robot has the capabilities to service a task, and the task is serviced before the deadline. (3h) and (3i) enforce the battery life and budget constraints. Finally, (3j) and (3k) ensure positive arrival times and initialize the variables.

\subsection{Greedy Algorithm}
\label{sec:greedy}
To the best of our knowledge there are no state-of-the-art methods for Problem \ref{prob:fleet_design}. Thus, we introduce a greedy algorithm to establish a baseline.
Beginning with an empty fleet $F=\emptyset$, $\Greedy$ iteratively adds the robot $r^*$ that solves:
\be
\label{eq:greedy_step}
r^* = \arg\max_{r_i\in R}
\frac{\rho(Q_{\pi}(F\cup\{r_i\}), \T) - \rho(Q_{\pi}(F), \T)}
{b_i}.
\ee
The process is repeated until the budget is exhausted.
However, MRTA itself is NP-hard and thus the greedy step cannot be solved within polynomial time. Moreover, approximation algorithms for MRTA are often not available under complex constraints such as task deadlines.
Thus, $\Greedy$ does not have an approximation guarantee.

\subsection{Fleet Optimization via Large Neighbourhood Search}
We now present our large neighbourhood search (LNS) approach to Problem \ref{prob:fleet_design}.
Beginning with an arbitrary initial solution, LNS algorithms repeatedly remove parts of the solution, \textit{e.g.,} vertices from a TSP tour, and then reinsert these elements. 
This allows LNS approaches to solve large instances of complex problems.

\paragraph{Algorithm Overview}
We propose an integrated LNS algorithm that simultaneously optimizes which robots are part of the fleet and what route each robot takes.
Thus, our method uses two removal heuristics, allowing it to randomly switch between a) changing which robots are currently part of the fleet and b) improving the current tours.
The main procedure is summarized in Algorithm \ref{alg:FLNS}.
We begin with a randomly generated feasible solution $Q$. Over $K$ iterations, we select a mode for the removal heuristic and then remove a subset of all tasks from the tours $Q$ (line 6). 
We then re-enter removed tasks $\T'$ into the subtours $Q'$ (line 7), and update the current solution $Q$ and best solution found thus far $Q^{\text{best}}$(lines 8-9).
Further, we use simulated annealing and potentially accept a suboptimal new solution to allow for more exploration in early iterations of the algorithm (lines 11-12).
Next, we will present the two proposed removal heuristics and the insertion heuristic in detail.

\begin{algorithm}[t]	
	\DontPrintSemicolon 
	\KwIn{Graph $G$, depot $s$, tasks $\T$, robot classes $R$, budget $B$, integer $K$.}
	\KwOut{Robot fleet $F$ and tours $Q$}
    Initialize tours $Q = \{\emptyset, \emptyset, \dots\}$ for all $r_i\in \F$\\
    
    $Q \leftarrow$ Randomly generate feasible tours\\
    $Q^{\text{best}}\leftarrow Q$\\
    \For{$k=1$ to $K$}{
        $\mathtt{mode}\leftarrow \mathtt{Select\_Removal\_Mode}$ \grey{ // tasks or robots}\\  
        $Q', \T'\leftarrow$ $\mathtt{Removal}(Q,\mathtt{mode})$\\
        $Q^{\text{new}} \leftarrow \mathtt{Repair}(Q', G, \T', B)$\\
        \If{$\rho(Q^{\text{new}}, \T) > \rho(Q, \T)$}{
        $Q^{\text{best}}\leftarrow Q^{\text{new}}$\\
        $Q\leftarrow Q^{\text{new}}$
        }
        \If{$\mathtt{Accept}(Q^{\text{new}}, k)$}
        {$Q\leftarrow Q^{\text{new}}$}
    }
    $F\leftarrow$ robots with non-empty tours in $Q^{\text{best}}$\\
	\Return{$F$, $Q^{\text{best}}$}
	\caption{Fleet LNS}
	\label{alg:FLNS}
\end{algorithm}

\paragraph{Removal Heuristics}
We propose two removal heuristics: $\delr$ and $\delv$ to enable replacing robots in the fleet and improving current tours.

$\delr$ selects a random subset of all robots and deletes their tours entirely, effectively removing these robots from the fleet. 
A tuning parameter  $n^{\mathtt{R}}$ defines an upper limit for the size of the subset of robots that are removed.
This allows for replacing parts of the fleet with different robots in the subsequent $\mathtt{Repair}$ step.

$\delv$ removes a random subset of the tasks of each current tour. Similar to the first heuristic, a parameter $n^{\mathtt{T}}$ describes the maximum percent of tasks that can be deleted.
The subsequent $\mathtt{Repair}$ step then may improve the tours without changing which robots are \textit{active} (\textit{i.e.,} in the fleet).
Finally, the function $\mathtt{Select\_Removal\_Mode}$ randomly chooses one of the heuristics following some bias $p^{\mathtt{removal}}$.

\paragraph{Insertion Heuristic}

\begin{algorithm}[t]	
	\DontPrintSemicolon 
	\KwIn{Graph $G$, current tours $Q$, tasks $\T$, unassigned tasks $\T'$, budegt $B$.}
	\KwOut{New tours $Q'$}
    $Q'\leftarrow Q$\\
    \While{$\T'$ is not empty}{
    $T\leftarrow \mathtt{pop\_random\_task}(\T',Q')$\\    
     \For{$r_i$ in $\F$}{
        $\tau_i'\leftarrow$ best insertion of $T$ in tour $\tau_i$\\  
        $Q^i\leftarrow Q'\cup \tau'_i\setminus\tau_i $\\ 
        Compute utility $z^i$\\       
        }
    \If{$\max_i \ z^i > 0$}{
       $Q'\leftarrow$ feasible $Q^i$ with largest $z^i$
    }
    
    }

	\Return{$Q'$}
	\caption{$\mathtt{Repair}$ (Insertion heuristic)}
	\label{alg:insertion}
\end{algorithm}
We now present our $\mathtt{Repair}$ heuristic.
Consider a robot $r_i$ with current tour $\tau_i$, and some task $T$.
A maximum reward insertion then finds position in the tour $\tau_i$ such that the reward of adding $T$ at that position is maximized. We notice that this is independent of the other tours.
Let $Q$ be the current set of tours, and let $Q^i$ denote the set of tours when $T$ is added to robot $r_i$. 
One could then assign $T$ to the robot $r_i$ where the marginal gain $\rho(Q^i,\T)-\rho(Q,\T)$ is largest. However, such a strategy does not consider the cost of adding a new robot to the fleet.
Thus, we construct a utility function.
First, we check if $Q^i$ is feasible with respect to the budget $B$ (equation \eqref{eq:budget_feasible}) and each robots battery-life constraint. If $Q^i$ is infeasible, the utility is zero. 
If the insertion is feasible, we compose the utility using the marginal gain, a discount factor $\delta^i$ and a noise term $\eta$:
\be
z^i =
(1+\eta)
        \; \delta^i
        \;(\rho(Q^i, \T)-\rho(Q, \T)).
\ee
The discount factor penalizes adding $T$ to a robot which currently has an empty tour. Thus, $\delta^i=1$ if $\tau_i$ is not empty. 
Otherwise, $\delta^i$ is drawn randomly from $\{1,\nicefrac{1}{\beta(r_i)}\}$ with some probability $p^{\mathtt{discount}}$, \textit{i.e.,} randomly discounts the marginal gain with the robots deployment cost.
Lastly, $\eta=\in[0,.1]$ is a small uniformly random noise term, increasing exploration.

Given currently unassigned tasks $\T'$, we select a task randomly (line 3) and then find the maximum reward insertion into the current tour $\tau_i$ for every robot $r_i$ in the base fleet $\F$ (lines 4-6). 
We compute a utility $z^i$ for the insertion $Q^i$ (lines 7) and update the new set of tours $Q'$ (lines 8-9). This is repeated until the set of unassigned tasks $\T'$ is empty.

\begin{figure}[t]
    \centering
    \begin{subfigure}[t]{0.2\textwidth}
         \centering
        \includegraphics[width=0.8\linewidth]{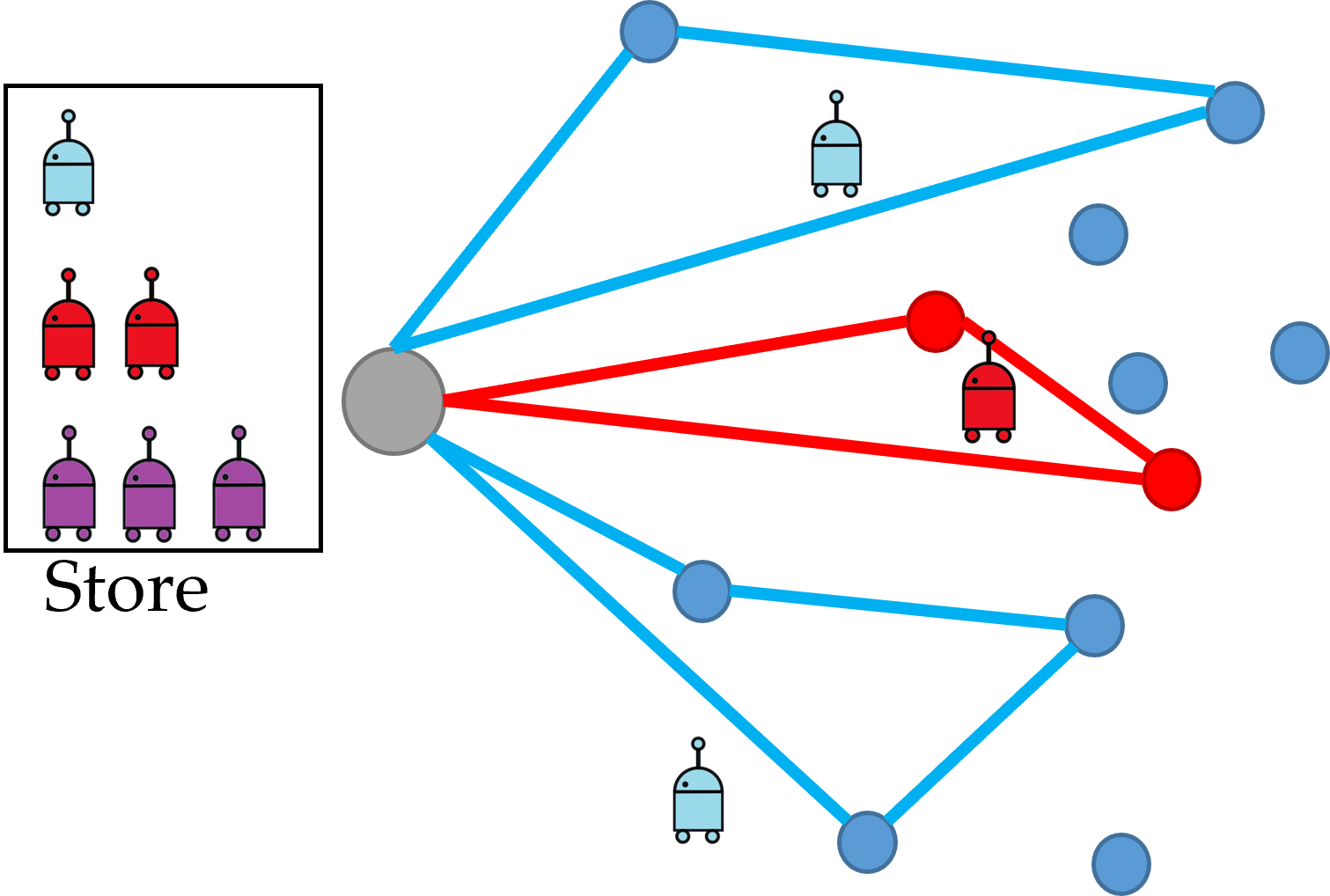}
    \caption{Initial solution.}
        \label{fig:alg_initial}
    \end{subfigure}\\
    \vspace{.5cm}
    \begin{subfigure}[t]{0.2\textwidth}
         \centering
        \includegraphics[width=0.8\linewidth]{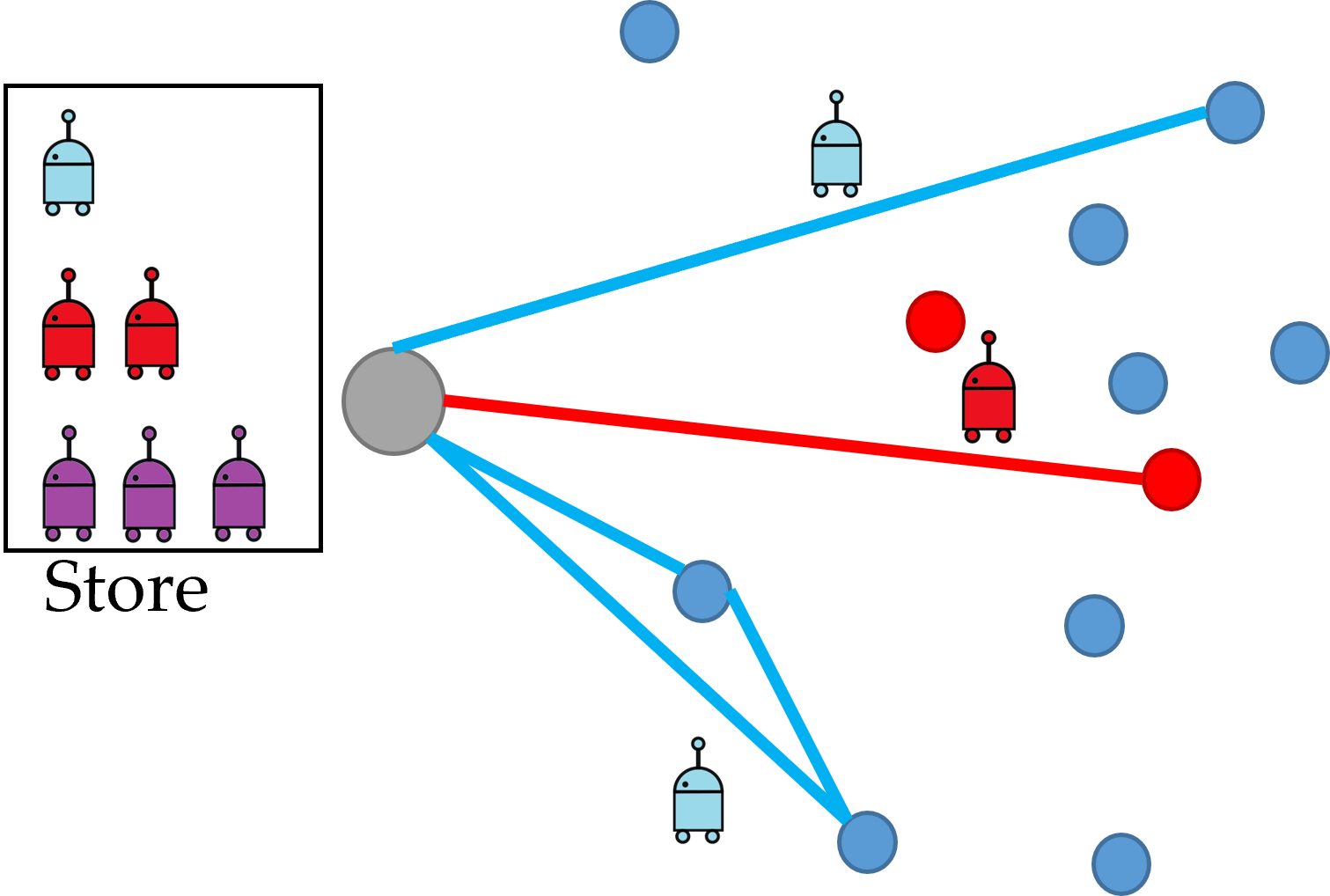}
    \caption{Iteration 1: $\delv$.}
        \label{fig:alg_iter1remove}
    \end{subfigure}
    \hspace{.5cm}
    \begin{subfigure}[t]{0.2\textwidth}
         \centering
        \includegraphics[width=0.8\linewidth]{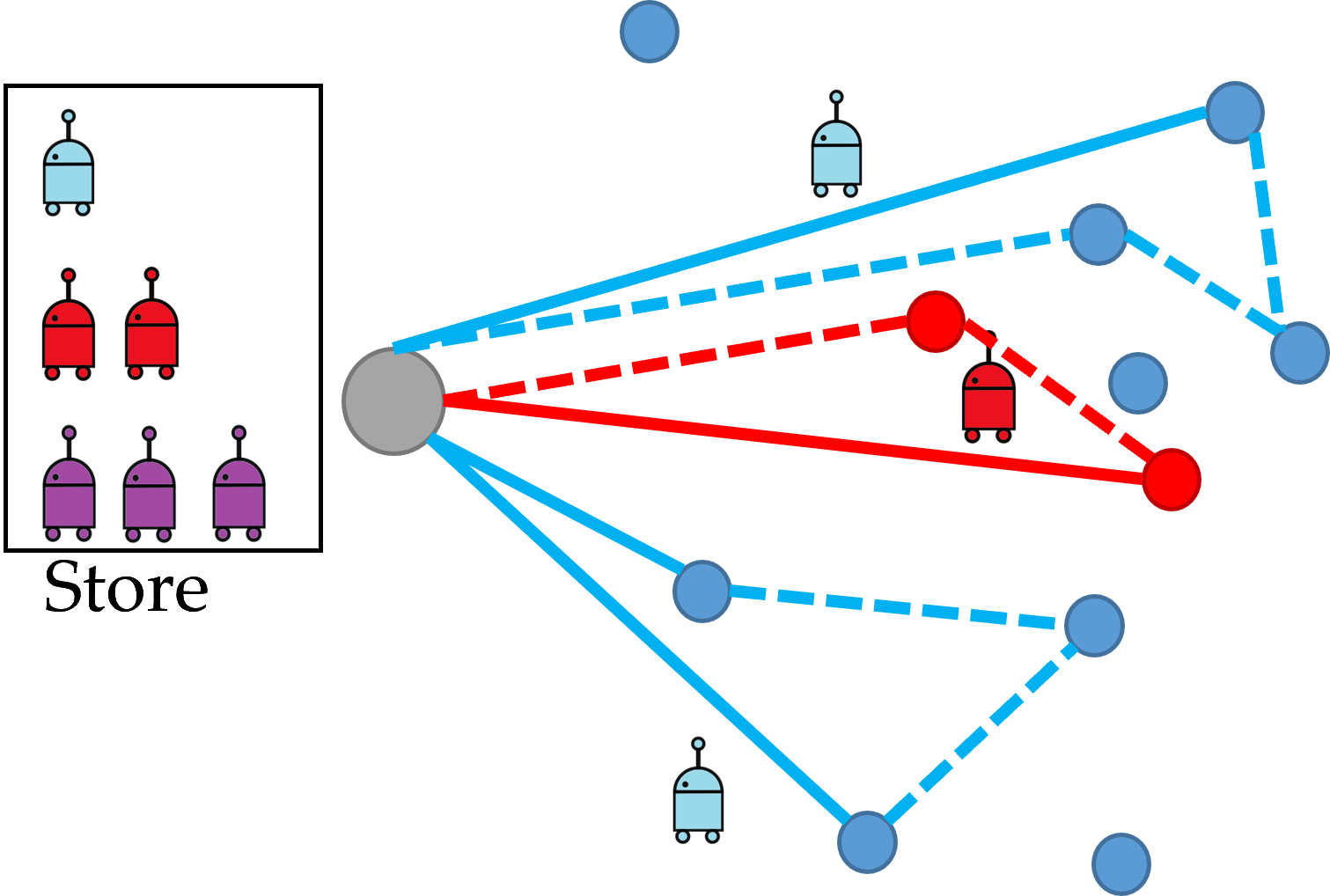}
    \caption{Iteration 1: $\mathtt{Repair}$.}
        \label{fig:alg_iter1repair}
    \end{subfigure}\\
    \vspace{.5cm}
    \begin{subfigure}[t]{0.2\textwidth}
         \centering
        \includegraphics[width=0.8\linewidth]{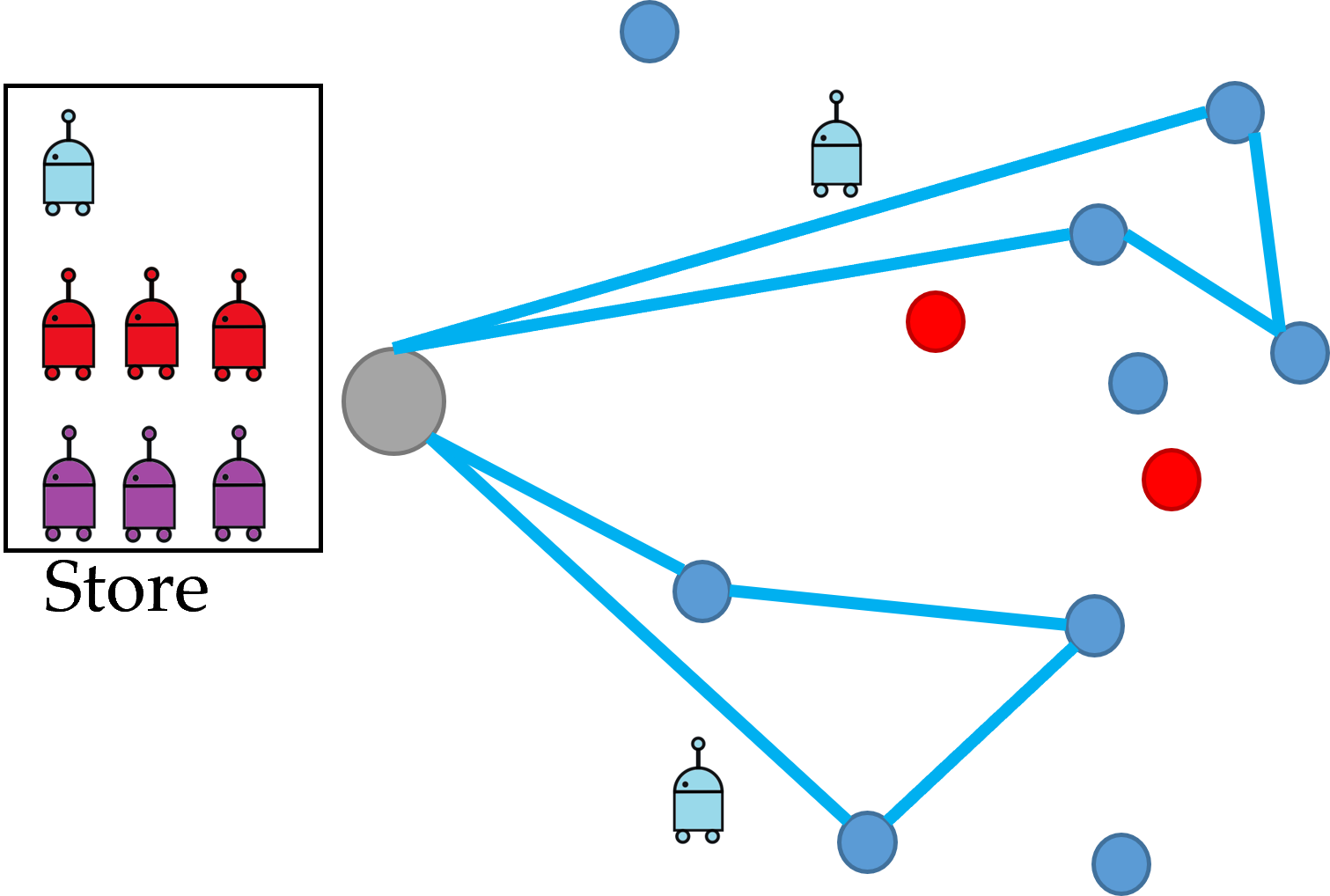}
    \caption{Iteration 2: $\delr$.}
        \label{fig:alg_iter2remove}
    \end{subfigure}
    \hspace{.5cm}
    \begin{subfigure}[t]{0.2\textwidth}
         \centering
        \includegraphics[width=0.8\linewidth]{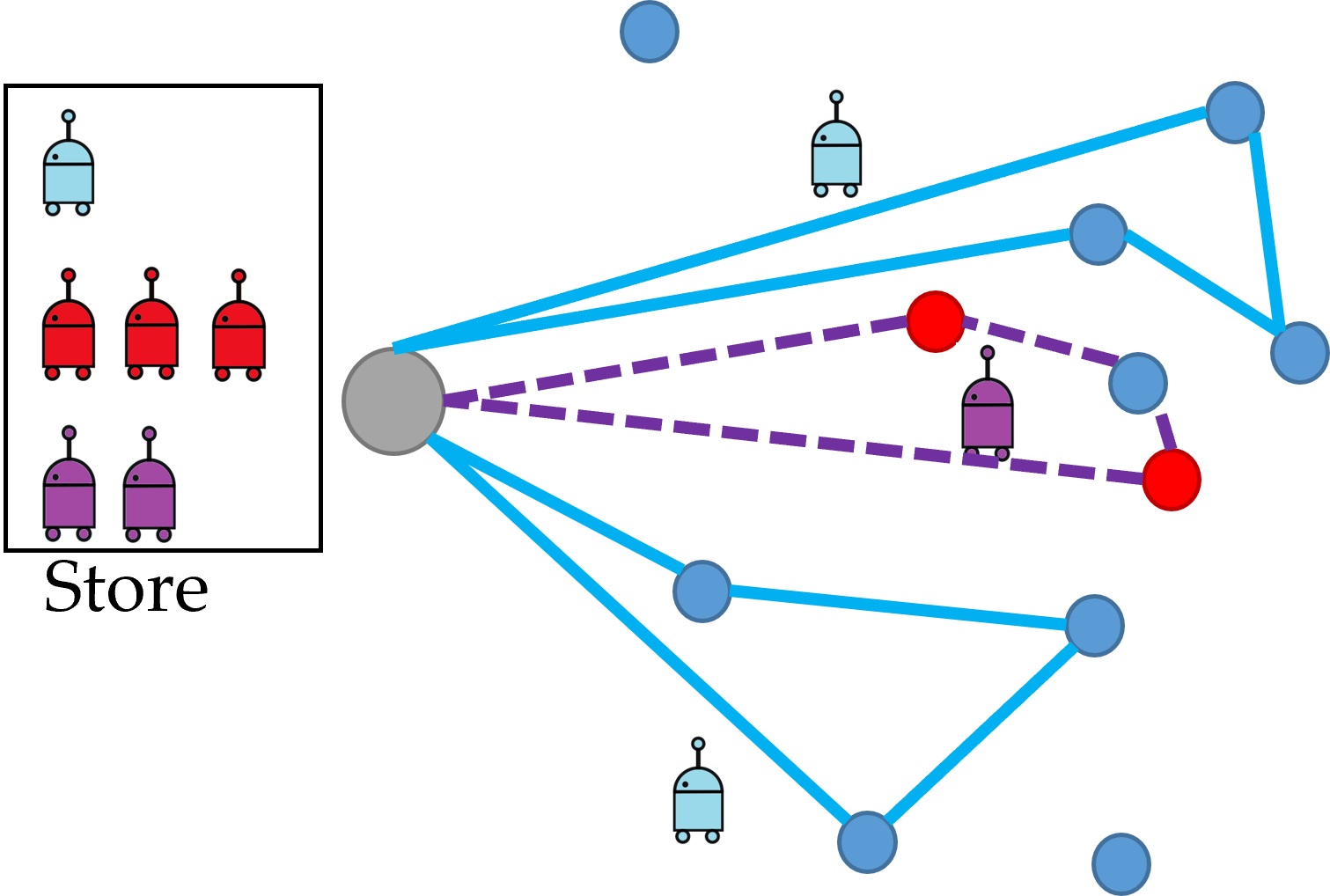}
    \caption{Iteration 2: $\mathtt{Repair}$.}
        \label{fig:alg_iter2repair}
    \end{subfigure}\\
    \vspace{.5cm}
    \begin{subfigure}[t]{0.2\textwidth}
         \centering
        \includegraphics[width=0.8\linewidth]{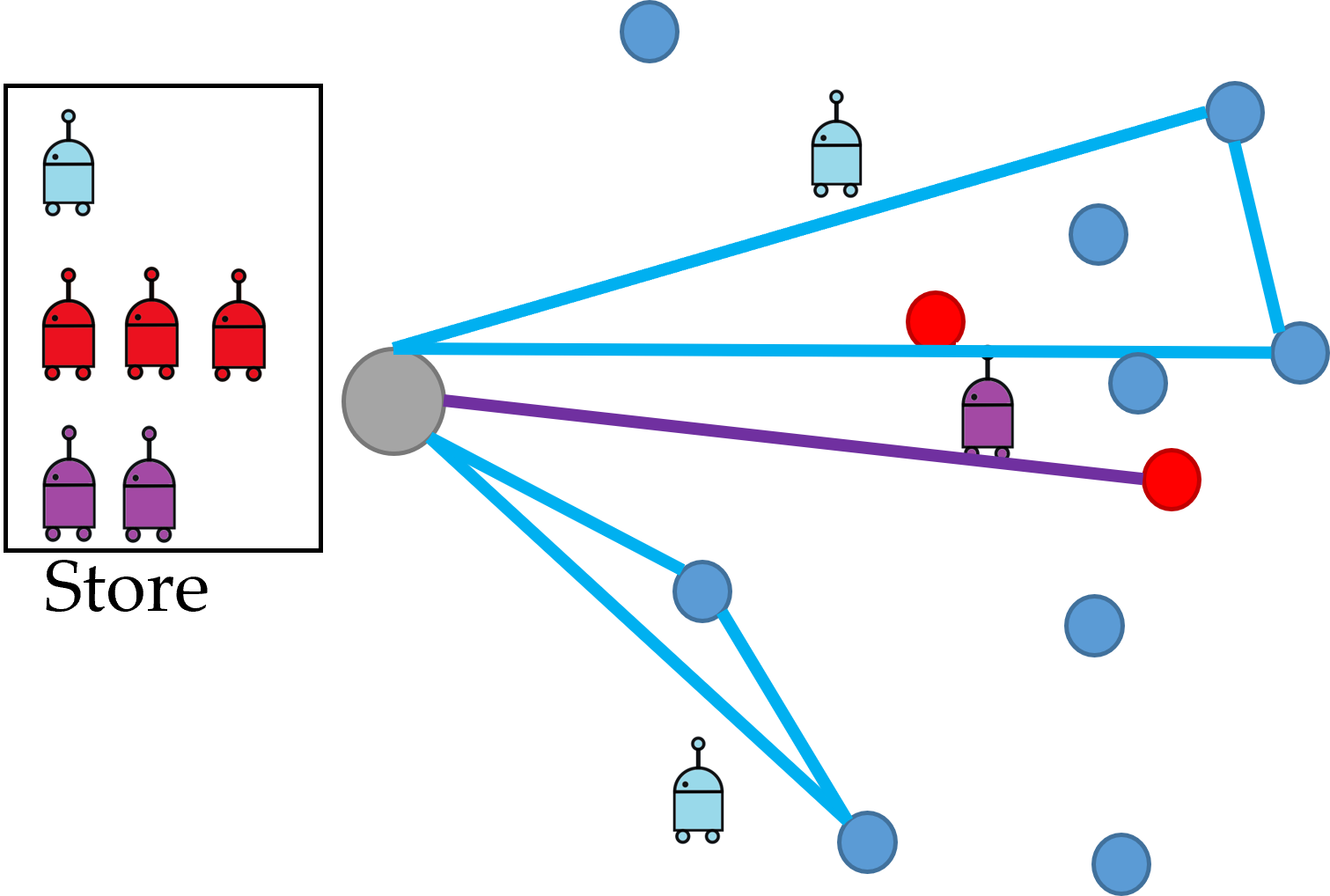}
    \caption{Iteration 3: $\delv$.}
        \label{fig:alg_iter3remove}
    \end{subfigure}
    \hspace{.5cm}
    \begin{subfigure}[t]{0.2\textwidth}
         \centering
        \includegraphics[width=0.8\linewidth]{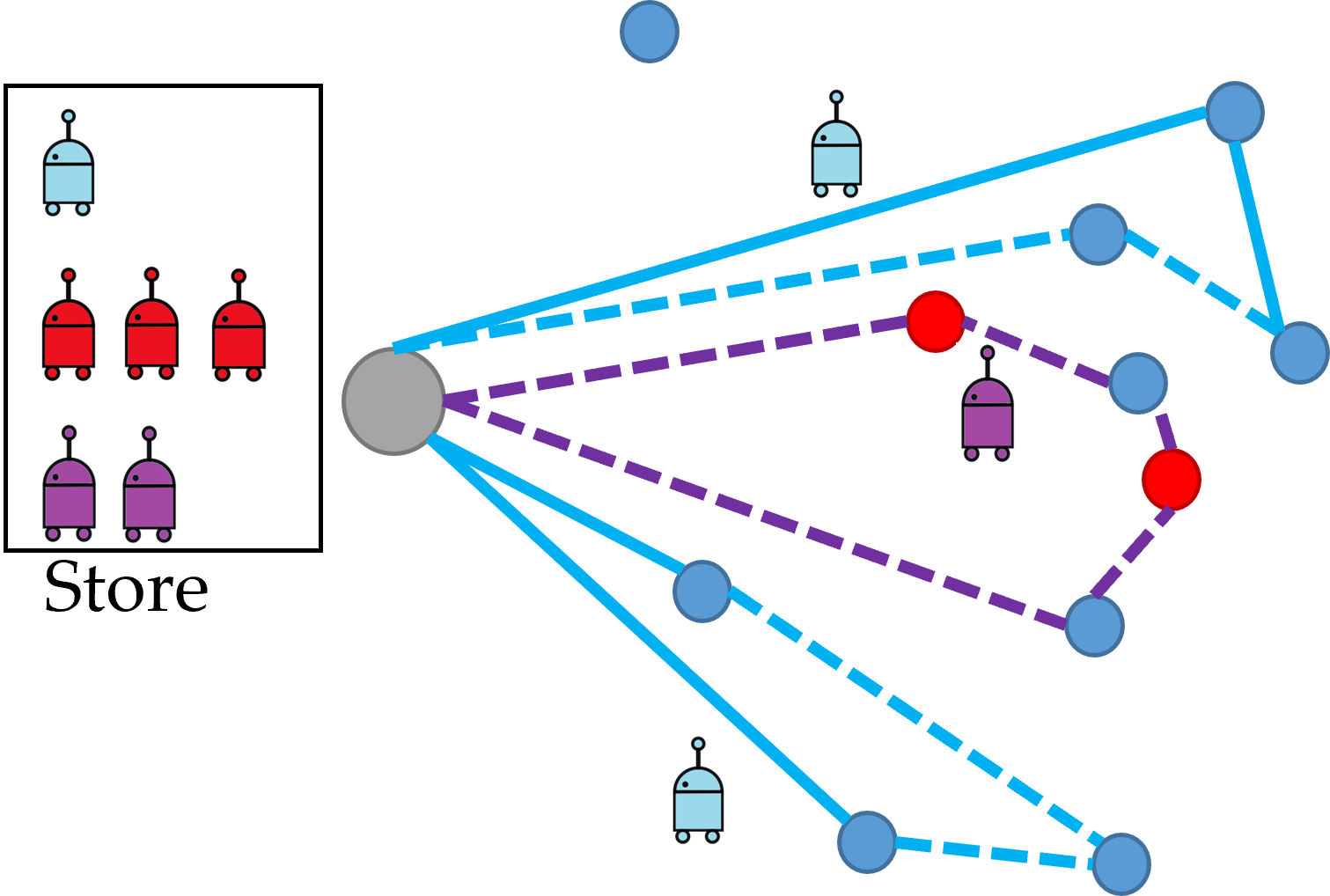}
    \caption{Iteration 3: $\mathtt{Repair}$.}
        \label{fig:alg_iter3repair}
    \end{subfigure}
    \caption{Illustration of Algorithm \ref{alg:FLNS} over three iterations. The grey dot is the depot, coloured dots are task locations. Blue and red robots can only service tasks of their respective colour, purple robots can service any task. Solid lines indicate current robot tours, dashed lines show edges added during $\mathtt{Repair}$. Robots in the `store' are currently not in the fleet and thus do not count towards the budget.
    \vspace{-.8cm}}
    \label{fig:algo}
\end{figure}
\paragraph{Example Illustration}
We provide an example of the LNS algorithm in Figure \ref{fig:algo}. Here the budget constraint allows for using at most three robots of any type.
An initial solution deploys two blue and one red robot, allowing the fleet to service 7 of 11 tasks.
In iteration $1$, a $\delv$ enables the upper blue robot to change its tour, increasing the number of serviced tasks to $8$ (subfigure b and c). 
This is the best attainable solution for the current fleet. In the second iteration, the algorithm performs a $\delr$ and subsequently replaces the red robot with a purple one, which is able to service yet another additional tasks (subfigure d and e).
Finally, by performing $\delv$ again, the purple robot takes over a task from the lower blue robot, allowing the latter to service another previously neglected task. Thus, in the final solution 10 out of 11 tasks are being serviced.

\section{Simulations}

We evaluate our work in a series of simulation experiments, comparing against two baselines.
\subsection{Experiment Setup}
\paragraph{Scenarios}
We consider three scenarios: \textit{Experiment 1} has a simple setup with only three ground robots and two task types. 
In \textit{Experiment 2} tasks have no deadlines, features more task requirements and offers a larger variety of robots.
In \textit{Experiment 3} we consider ground robots and aerial robots that jointly have to take different measurements, yet not all measurements can be taken by either robot type. Further, ground robots have to avoid obstacles while aerial robots can travel directly between task locations.
\paragraph{Baselines}
We consider two baselines: Selecting robots randomly until the budget is exhausted ($\mathtt{Random}$), and the greedy approach ($\mathtt{Greedy}$) described in Section \ref{sec:greedy}. To solve the underlying MRTA problem for computing \eqref{eq:greedy_step} and the final set of tours, we use another large neighbourhood search that optimizes the tours for a fixed fleet.
Using the MILP as a baseline is impractical due to very high runtime for even small instances.
Our approach is labelled as $\FLNS$.

\paragraph{Algorithm parameter}
The algorithm parameters are set up as follows:
Iteration budget $K=1000$, bias for random selection of removal heuristic $p^{\mathtt{removal}}=\nicefrac{1}{3}$, max.~$\%$ of robots removed by $\delr$ $n^{\mathtt{R}}=25$, max.~$\%$ of tasks removed by $\delv$ $n^{\mathtt{T}}=50$, and bias to neglect deployment cost in $\mathtt{Repair}$ $p^{\mathtt{discount}} = \nicefrac{1}{10}$.

\paragraph{Environment}
We use a schematic real-world campus map, shown in Figure \ref{fig:example_experiment}. Given {uniformly sampled task locations}, we  create a complete meta-graph, where vertices correspond only to the depot and task locations, and edge lengths are given by the shortest paths on a PRM.
All task deadlines are $t^d=150$. We repeat experiments for 20 trials.

\subsection{Results}
\paragraph{Experiment 1}
In the first experiment, we consider two different task types and only three robot types with varying capabilities and cost, summarized in Table \ref{tab:robot_types1}. 

\begin{table}[!t]

\begin{subtable}{.49\textwidth}
\centering
    \begin{tabular}{l c cccc}
        \toprule

       ID & capabilities & speed & battery&cost & type
        \\
        \midrule         
        $1$ &   $\{1\}$ & $100\%$ & $200$ & $20$& AGV \\
        $2$ &   $\{2\}$ & $100\%$ & $200$ & $20$ & AGV\\
        $3$ &   $\{1,2\}$ & $150\%$ & $500$ & $25$ & AGV\\
       
        \bottomrule
    \end{tabular}
    \caption{Experiment 1.}
    \label{tab:robot_types1}
\end{subtable}
    
    \begin{subtable}{.49\textwidth}
    \centering
\begin{tabular}{l c cccc}
        \toprule
       ID & capabilities & speed & battery&cost & type
        \\
        \midrule         
        $1$ &   $\{1\}$ & $100\%$ & $300$ & $20$ & AGV\\
        $2$ &   $\{2\}$ & $100\%$ & $300$ & $20$& AGV \\
        $3$ &   $\{3\}$ & $100\%$ & $300$ & $20$ & AGV\\
        $4$ &   $\{1,2,3\}$ & $150\%$ & $300$ & $30$& AGV \\
        $5$ &   $\{1,2\}$ & $100\%$ & $250$ & $25$& AGV \\       
        \bottomrule
    \end{tabular}
    \caption{Experiment 2.}
    \label{tab:robot_types2}
    \end{subtable}

    \begin{subtable}{.49\textwidth}
    \centering
    \begin{tabular}{l c ccc c}
        \toprule
       ID & capabilities & speed & battery&cost  & type
        \\
        \midrule         
        $1$ &   $\{1\}$ & $100\%$ & $300$ & $20$ & AGV \\
        $2$ &   $\{1, 2\}$ & $150\%$ & $300$ & $25$ & AGV\\
        $3$ &   $\{1,3\}$ & $200\%$ & $300$ & $20$ & UAV\\
        $4$ &   $\{3\}$ & $200\%$ & $250$ & $10$& UAV \\
        $5$ &   $\{1\}$ & $300\%$ & $250$ & $15$& UAV \\       
        \bottomrule
    \end{tabular}
    \caption{Experiment 3.}
    \label{tab:robot_types3}
    \end{subtable}
    \caption{Different types of robots used in the Experiments.
    \vspace{-.5cm}
    }   
\end{table}

Figure \ref{fig:example_experiment} illustrates example solutions for $\Greedy$ and $\FLNS$ for $N=60$ tasks and a budget of $B=70$. $\Greedy$ iteratively adds one robot with ID $1$, $2$, and $3$, resulting in a fleet that services $27$ tasks before their deadlines. The $\FLNS$ approach assembles a fleet with one robot of type $2$ and two robots of type $3$, allowing the fleet to service $38$ tasks.
The example highlights the main shortcoming of a greedy approach: In early iterations, low budget robots have a large marginal gain and are thus selected. Yet, in later iterations the  remaining budget does not suffice to add multiple flexible robots that could combine different task types.
In contrast, $\FLNS$ employs multiple of the more expensive but also much more capable type 3 robots, leading to more tasks being serviced.
\begin{figure}[t]
    \centering
    \begin{subfigure}[t]{0.2\textwidth}
         \centering
        \includegraphics[width=0.99\linewidth]{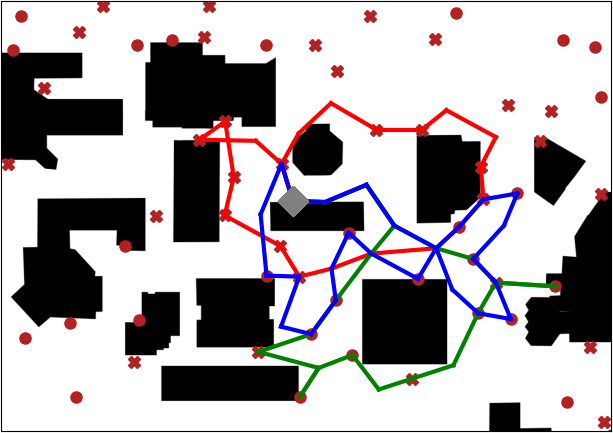}
    \caption{Greedy solution.}
        \label{fig:example_experiment_greedy}
    \end{subfigure}
    \quad
    \begin{subfigure}[t]{0.2\textwidth}
         \centering
        \includegraphics[width=0.99\linewidth]{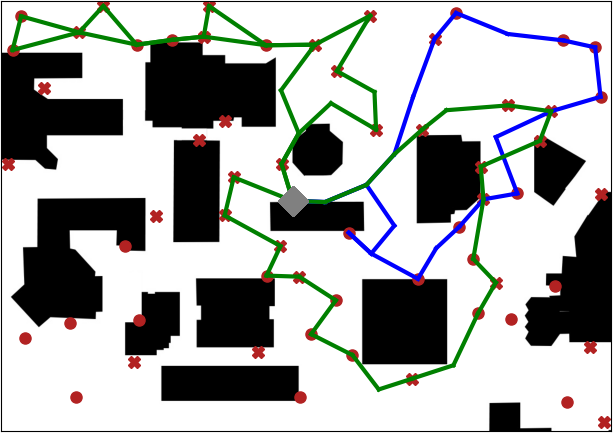}
    \caption{LNS solution.}
        \label{fig:example_experiment_LNS}
    \end{subfigure}
    \caption{Example for Experiment 1. Different colors indicate different robot types from Table \ref{tab:robot_types1} (ID 1 -- red, ID 2 -- blue, ID 3 -- green)
    \vspace{-.5cm}
    }
    \label{fig:example_experiment}
\end{figure}

We repeat the experiment for varying budgets and numbers of tasks, and quantitative results are shown in Figure \ref{fig:results1}.
With increasing budgets, all methods are able to service more tasks. However, while $\Greedy$ performs only slightly better than $\Random$, the proposed method services significantly more tasks under almost all settings. In particular, $\FLNS$ only requires roughly half of the budget ($B=50$) to achieve the same performance as the baselines with the full budget ($B=100$). For $N=20$, $\FLNS$ is able to service all tasks as the budget increases. Overall, the relative performance of the baselines becomes comparably poorer for larger $N$ even for the full budget $B=100$: For $N=20$ $\Greedy$ and $\FLNS$ service $100\%$ of tasks. For $N=60$, $\FLNS$ still achieves $100\%$, yet $\Greedy$ falls to $83\%$ and for $N=100$ they achieve $90\%$ and $70\%$, respectively.
In summary, in the simple setup with only few robot types the proposed method is able to find substantially stronger solutions than a greedy approach.

\begin{figure}[t]
\centering
\begin{subfigure}[t]{0.45\textwidth}
\centering
        \includegraphics[width=.9\linewidth]{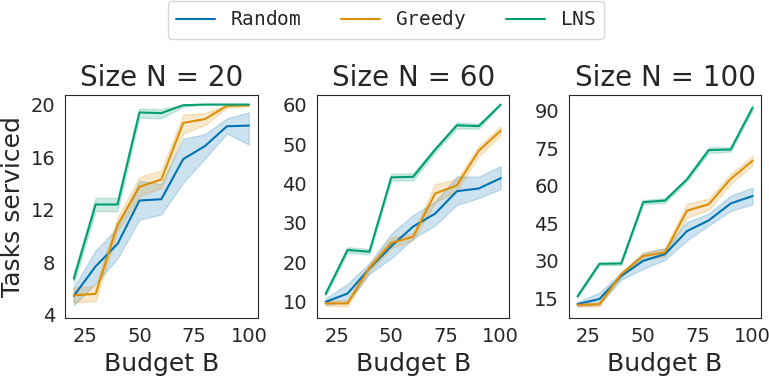}   
    \caption{Experiment 1.}
    \label{fig:results1}
\end{subfigure}\\
\begin{subfigure}[t]{0.45\textwidth}
\centering
        \includegraphics[width=0.9\linewidth]{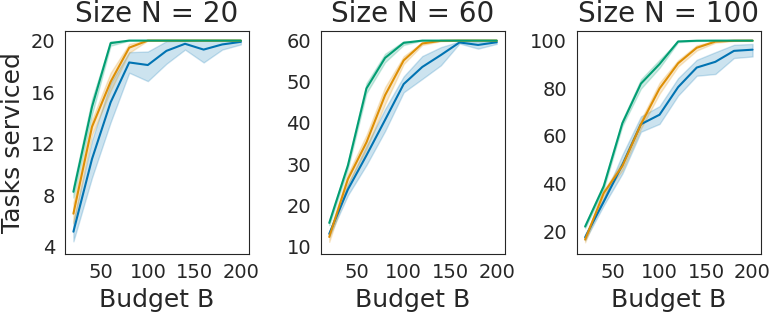}   
    \caption{Experiment 2. }
    \label{fig:results2}
\end{subfigure}\\
\begin{subfigure}[t]{0.45\textwidth}
\centering
        \includegraphics[width=0.9\linewidth]{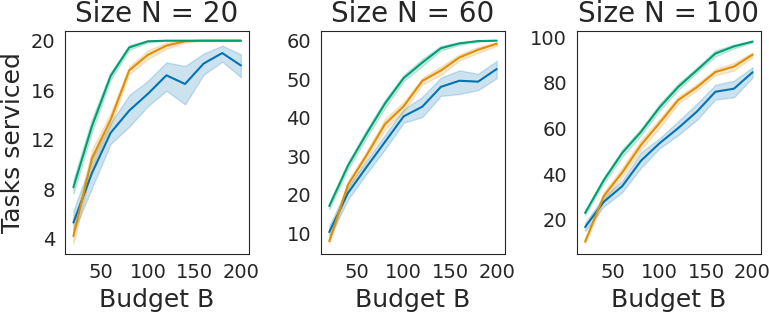}   
    \caption{Experiment 3.}
    \label{fig:results3}
    \end{subfigure}
    \caption{Experimental Results: \# serviced tasks for different budgets.
    \vspace{-.8cm}
    }
\end{figure}
\paragraph{Experiment 2}
Experiment 2 extends the setup to three task requirements and five available robots (see Table \ref{tab:robot_types2}), and removes task deadlines.
Figure \ref{fig:results2} shows a similar trend as in the first experiment, yet the larger budgets allow all for more tasks being completed. $\FLNS$ requires a substantially smaller budget to match the best performance of $\Greedy$: To achieve $\approx100\%$ serviced tasks under different values for $N= 20, 60,  100$, $\FLNS$ needs a budget of $B=80$, $B=120$, and $120$, while $\Greedy$ requires $B=100$, $B=140$, and $180$, respectively.

\paragraph{Experiment 3}
The third experiment considers a data collection mission where ground vehicles and drones take measurements. There are four different task types, of which some are exclusive to different robot types, as listed in Table \ref{tab:robot_types3}.
Similar to the other experiments, Figure \ref{fig:results3} shows that $\FLNS$ consistently outperforms $\Greedy$. For the different values of $N$, $\Greedy$ achieves its highest performance for $B=160$, $B=200$ and $B=200$, respectively. $\FLNS$ requires only budgets of $B=100$, $B=160$ and $B=160$, respectively, to achieve the same number of serviced tasks. As in Experiment 2, the performance gap is largest for relatively small budgets. This suggests that the performance of $\Greedy$ suffers from suboptimal choices in early iterations. The $\FLNS$ approach is able to avoid these local optima and thus find better solutions.

\paragraph{Runtime}
Overall, $\Greedy$ and $\FLNS$ perform comparably for small problems ($N=20$), running on average within $15s$ per instance. In Experiment 1, $\Greedy$ performs better for large instances ($N=100$), resulting in $ 150s$ compared to $200s$ for $\FLNS$. However, Experiment 2 and 3 feature more robots, causing the runtime of $\Greedy$ to increase drastically to $>500s$ for the larger instances. In contrast, $\FLNS$ maintains an average runtime of $200s$. Moreover, using only $100$ iterations for $\FLNS$ reduces its runtime by factor $10$, yet $\FLNS$ still substantially outperforms $\Greedy$ in Experiments 1 and 2, and by a small margin in Experiment 3.%
%
%
%
\section{Discussion and Future Work}

We studied the problem of designing heterogeneous robot fleets for multi-robot task assignment. We provided a MILP formulation and presented an LNS algorithm. In simulation experiments, we demonstrated that the LNS approach consistently finds better solutions than a greedy approach, \textit{i.e.,} requires smaller budgets to service the same number of tasks, while its runtime scales better to large instances.

A limitation of our work is the simple cost model for robot deployment. This could be extended to operation costs that increases with the robot's continued deployment. Further, our experiment relied on synthetic data. Using real-world robot models and realistic travel capabilities would highlight the practical benefits of the proposed method.
Lastly, we considered tasks that are serviced independently, \textit{i.e.,} one robot's mission does not affect how another robot can service its tasks. Yet, in some applications such as search-and-rescue, one robot's task completion might benefit other robots. This poses further challenges to the fleet design problem.

\bibliographystyle{IEEEtran}

\end{document}